\documentclass[letterpaper,10pt,conference]{formatting/ieeeconf}
\IEEEoverridecommandlockouts
\let\labelindent\relax
\usepackage{amsfonts}       

\usepackage{amsthm}
\usepackage{mathtools}      
\usepackage{amssymb}        
\usepackage{filecontents}
\usepackage{graphicx}       
\usepackage{marginnote}     
\usepackage{marvosym}       
\usepackage{overpic}        
\usepackage{tabularx}
\usepackage{cite}
\usepackage{color}
\usepackage[linesnumbered,algoruled,boxed,lined,noend]{algorithm2e}
\usepackage[font=small,labelfont=bf]{caption}

\usepackage{float} 
\usepackage{wrapfig}
\usepackage{comment}

\let\labelindent\relax
\usepackage{enumitem}
\setlist{leftmargin=3.5mm}

\usepackage{tikz} 
\usepackage[a-2b,mathxmp]{pdfx}[2018/12/22]

\usepackage{caption}
\usepackage{subcaption}

\usepackage[normalem]{ulem}
\usepackage{epstopdf}
\usepackage{enumitem}
\usepackage[normalem]{ulem}
\usepackage{lipsum}
\usepackage[a-2b,mathxmp]{pdfx}[2018/12/22]

\newcommand{\argmin}{\mathop{\mathrm{arg\,min}}}

\def\RVG{\mathcal{RVG}\xspace}

\usepackage{xcolor}
\usepackage{xargs} 
\newif\ifdraft
\draftfalse

\ifdraft
\usepackage[paperheight=11in,paperwidth=9.5in,
			left=1.25in,right=1.25in,
			top=0.75in,bottom=0.75in,
			heightrounded,marginparwidth=1.2in,
			marginparsep=0.05in]{geometry}
\usepackage{xcolor}
\usepackage{xargs} 
\usepackage[textsize=footnotesize]{todonotes}
\newcommandx{\nt}[2][1=]{\todo[linecolor=red,
			backgroundcolor=red!10,bordercolor=red,#1]{#2}}
\newcommandx{\jy}[2][1=]{\todo[linecolor=green,
			backgroundcolor=green!10,bordercolor=green,#1]{JY:#2}}
\newcommandx{\dz}[2][1=]{\todo[linecolor=red,
			backgroundcolor=red!10,bordercolor=red,#1]{DZ:#2}}
\else
\newcommand{\nt}[1]{{}}
\newcommand{\jy}[1]{{}}
\newcommand{\dz}[1]{{}}
\fi

\newif\iftwocolumn
\twocolumntrue

\newtheorem{lemma}{Lemma}[section]

\newtheorem{corollary}{Corollary}[section]
\newtheorem{theorem}{Theorem}[section]
\theoremstyle{definition}
\newtheorem{definition}{Definition}[section]
\theoremstyle{remark}

\SetAlgoSkip{}
\SetKwProg{Fn}{Function}{}{}
\SetKwComment{Comment}{$\triangleright$\ }{}

\makeatletter
\def\subsubsection{\@startsection{subsubsection}
                                 {3}
                                 {\z@ \hspace*{1mm}}
                                 {0ex plus 0.1ex minus 0.1ex}
                                 {0ex}
                                 {\normalfont\normalsize\itshape}}
\def\section{\@startsection{section}{1}{\z@}{1.5ex}{0.7ex}%
{\normalfont\normalsize\centering\scshape}}
\makeatother
\usepackage{xspace}
\newtheorem{assumption}{Assumption}[section]
\newcommand{\subfloat}[2][]{\subcaptionbox{#1}{#2}}

\newcommand{\dRVG}{\mbox{\upshape dRVG}\xspace}
\renewcommand{\RVG}{\textsc{RVG}\xspace}
\newcommand{\W}{\mathcal W}
\newcommand{\Cfree}{\mathcal C_{\mathrm{free}}}
\newcommand{\Gfull}{\mathcal G_{\mathrm{full}}}

\title{\LARGE \dRVG: Quadtree-Guided, Resolution-Complete Online Motion Planning for Polygonal Robots in Unknown Environments}

\author{Duo Zhang \qquad Hechen Zhang \qquad Junshan Huang \qquad Jingjin Yu%
\thanks{D. Zhang, H. Zhang, J. Huang, and J. Yu are with the Department of Computer Science, Rutgers, the State University of New Jersey, Piscataway, NJ, USA. E-Mails: \texttt{\{duo.zhang, hechen.zhang, junshan.huang, jingjin.yu\}@rutgers.edu}.}%
}

\begin{document}
\maketitle
\raggedbottom
\AddToHookNext{shipout/after}{\begingroup\globaldefs=1\flushbottom\endgroup}
\suppressfloats[t]
\thispagestyle{empty}
\pagestyle{empty}

\begin{abstract}
We present the dynamic rotation-stacked visibility graph (\dRVG), an online motion planner that guides polygonal robots to specified goals in initially unknown, static environments. It merges local roadmaps from successive observations to plan collision-free translations and rotations without a uniform position grid. A spatial quadtree schedules sensing goals across regions to reduce repeated visits while retaining all orientation configurations for routing. Under exact sensing and geometric computation and star-shaped robot and envelope assumptions, \dRVG with center scans is resolution-complete relative to full-map RVG at the same angular resolution. In experiments using footprint scans, \dRVG solves all 140 cases across 20 difficult maps and seven angular resolutions within a 20\,s planning budget, with a median planning time of 1.18\,s at 360 orientation layers. Six microMVP demonstrations illustrate the complete online planning loop on a physical robot.
\end{abstract}

\section{Introduction}\label{sec:intro}
\begin{figure}[t]
  \centering
  \includegraphics[width=1\columnwidth]{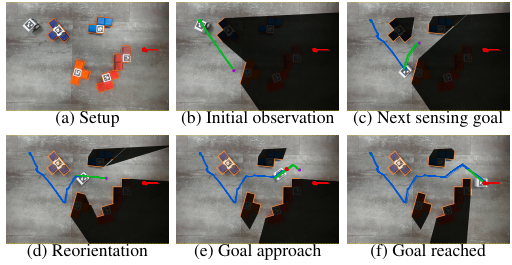}
  \caption{Online \dRVG execution on microMVP: (a) setup; (b--f) navigation. Temporary sensing goals (magenta) guide translation and reorientation until the robot reaches its goal. Blue is the executed trajectory, green the planned path, and black the space not yet observed.}
  \label{fig:real-robot}
\end{figure}

Autonomous navigation in unfamiliar environments enables robotic inspection and delivery before a complete map is available. Deployments at Fukushima demonstrated the value of accessing areas too dangerous for human inspection~\cite{nagatani2013emergency}. Indoor service robots may likewise need to reach a specified room while discovering the surrounding layout~\cite{stein2018subgoals}. These applications require reaching a known destination while discovering obstacles and a feasible route through sensing and motion. Robot shape makes this task more demanding: an elongated or nonconvex body may need to rotate to pass between furniture or debris. We develop an online motion planner that guides a robot with a polygonal footprint to a specified position and orientation among initially unknown, static obstacles. It selects collision-free motions and sensing locations to reveal routes toward the goal. Figure~\ref{fig:real-robot} shows goal arrival before full workspace observation.

Our method, the \emph{dynamic rotation-stacked visibility graph} (\dRVG), builds a graph of feasible motions as observations arrive. It extends the full-map rotation-stacked visibility graph (\RVG)~\cite{zhang2025rvg}, which represents translation and rotation across discrete orientation intervals without imposing a uniform position grid. At each sensing location, \dRVG constructs a local graph inside observed free space and merges it with earlier graphs. When the final goal is not yet reachable, the robot follows a known-free route to a temporary sensing goal. A spatial quadtree groups candidate sensing locations into regions and defers a region after it supplies a target, reducing repeated visits to nearby locations. All orientation configurations remain available for motion planning.

Under exact sensing and the stated star-shaped robot and envelope assumptions, we prove that \dRVG with center scans reaches the goal whenever full-map RVG at the same angular resolution contains a path. Quadtree scheduling preserves this guarantee while eventually serving every persistent reachable candidate. The proof uses center scans to establish a finite set of candidate sensing sites. Experiments use scans from the robot's center and footprint vertices to support more general shapes. A non-star-shaped counterexample shows that this guarantee does not extend to all polygonal robots.

On 20 difficult maps at seven angular resolutions, \dRVG solves all 140 cases within a 20\,s planning budget. At 360 orientation layers, its median planning time is 1.18\,s, and it solves 20/20 maps; repeated A*~\cite{hart1968astar}, D* Lite~\cite{koenig2002dstarlite}, $\mathrm{RRT}^{X}$~\cite{otte2016rrtx}, and EIT*~\cite{strub2022eitstar} solve at most 18, 10, 1, and 5 maps, respectively, across their tested grid or collision-check resolutions under the protocol in Section~\ref{sec:evaluation}. Quadtree scheduling improves success from 93/140 for naive greedy selection to 140/140; on jointly solved cases, greedy requires $5.51\times$ the planning time in geometric mean. Hardware demonstrations further illustrate the complete online planning loop on a physical robot.

We make the following contributions:
\begin{itemize}[topsep=2pt,itemsep=1pt,parsep=0pt,beginpenalty=10000]
  \item We develop an incremental rotation-stacked visibility graph that integrates local observations into a persistent roadmap for collision-free polygonal-robot motion in $SE(2)$.
  \item We design a quadtree-guided sensing policy that reduces repeated visits to nearby sites while retaining all orientation configurations for motion planning.
  \item We prove fixed-resolution completeness relative to full-map RVG for center scans under exact sensing and star-shaped robot and envelope assumptions, and establish its geometric limits through a non-star-shaped counterexample.
  \item We evaluate \dRVG against lattice and sampling-based planners, isolate the benefit of spatial scheduling through ablation, and demonstrate the online planning loop in hardware.
\end{itemize}

\section{Related Work}\label{sec:related-work}
\paragraph{Geometric Planning and Replanning}
Configuration-space and visibility-graph methods encode robot geometry~\cite{lozano1983cspace,lozano1979collision}. Classical treatments address rigid polygonal bodies among polygonal barriers~\cite{schwartz1983piano} and disc motion through retraction methods~\cite{odunlaing1985retraction}; state lattices and RVG capture orientation-dependent feasibility~\cite{pivtoraiko2009lattices,zhang2025rvg}. Alternatives include any-angle search~\cite{daniel2010theta}, sampling-based planning~\cite{kavraki1996prm,lavalle1998rrt,karaman2011sampling}, and potential-based control~\cite{khatib1986potential,rimon1992navigation}. As obstacles are discovered, repeated A*~\cite{hart1968astar} recomputes routes, whereas D*~\cite{stentz1994dstar}, LPA*~\cite{koenig2004lpa}, D* Lite~\cite{koenig2002dstarlite}, AD*~\cite{likhachev2005adstar}, and lazy incremental search~\cite{lim2024lazy} reuse search information. Dynamic RRT~\cite{ferguson2006replanning} and $\mathrm{RRT}^{X}$~\cite{otte2016rrtx} repair sampling-based structures; fast planners such as EIT* also support replanning from scratch~\cite{strub2022eitstar,sabbadini2026replanning}. These planning techniques address how to find or update a route in the current map. Navigation through unknown space also requires choosing where to observe when that map does not yet contain a route to the goal. Our approach couples incremental geometric planning with this sensing decision.

\paragraph{Sensor-Based Roadmaps}
Visibility, Voronoi, and gap representations support navigation as geometry is revealed~\cite{oommen1987learned,rao1995navigation,tovar2005gap,choset2000incremental}. Lee and Choset construct sensor-based roadmaps for convex and multi-convex rigid bodies in $SE(2)$~\cite{lee2005convex,lee2005multiconvex}. SP$^2$ATM combines topological mapping with intermediate-goal selection~\cite{ge2011sp2atm}, and FAR Planner updates visibility graphs online~\cite{yang2022far}. Chen et al. select short-term goals and address dead ends through a starshaped roadmap for a disk-shaped robot~\cite{chen2025starshaped}. SE(2) NavMesh builds heading-dependent traversable regions from observations~\cite{shi2026se2navmesh}. We focus on preserving connections between successive local roadmaps for a polygonal robot whose feasible motions depend on orientation. \dRVG combines online rotation-layered visibility graphs with spatial sensing scheduling. Its completeness guarantee is relative to full-map RVG at the same angular resolution, under the stated center-scan and geometric assumptions.

\paragraph{Sensing for Goal Navigation}
Bug algorithms and TangentBug pursue goals through local sensing and boundary following~\cite{lumelsky1987path,kamon1998tangentbug}, while frontier-based planners expand observed space~\cite{yamauchi1997frontier,senarathne2015frontier,zhou2021fuel}. Janson et al. formulate safe policies and information-aware optimality benchmarks~\cite{janson2018safe}. Learning over Subgoals predicts the consequences of exploratory actions~\cite{stein2018subgoals}; subsequent work learns their information value~\cite{arnob2024information}. UPEN and CogniPlan predict unseen occupancy or layouts to guide exploration and point-goal navigation~\cite{georgakis2022uncertainty,wang2025cogniplan}. The sensing objective in our setting is to reveal a feasible route to a specified goal; complete workspace coverage is unnecessary. We use observed geometry to generate reachable sensing candidates, without a learned model of unseen space. \dRVG schedules these sites across quadtree regions to reduce repeated regional visits while retaining all orientation configurations for routing. This separation allows the sensing policy to defer a region without removing configurations needed to pass through it. Under the analysis assumptions, every persistent reachable candidate is eventually served.

\section{Preliminaries}\label{sec:problem}
Let $\W\subset\mathbb R^2$ be a compact polygonal workspace with finitely many initially unknown, static, pairwise interior-disjoint polygonal obstacles, whose union is $O$. A holonomic robot has a compact simple polygonal footprint $R$ and reference and rotation point $c\in R$. At pose $q=(p,\theta)\in SE(2)$, with $p=(x,y)$ and planar rotation $\mathsf R_\theta$, its footprint is $R(q)=p+\mathsf R_\theta(R-c)$. The collision-free space is
\begin{equation}
\Cfree=\{q:R(q)\subseteq\W,\ \operatorname{int}R(q)\cap O=\emptyset\}.
\label{eq:configuration-free-space}
\end{equation}
Boundary contact is allowed. Given $q_s,q_g\in\Cfree$, we seek a path $\tau:[0,1]\to\Cfree$ from $q_s$ to $q_g$ with low executed cost
\begin{equation}
J(\tau)=\alpha\int_0^1\sqrt{dx^2+dy^2}
+\beta\int_0^1|d\theta|,
\qquad \alpha,\beta>0,
\label{eq:path-cost}
\end{equation}
which weights translation length and cumulative rotation.

\subsection{Rotation-Stacked Visibility Graphs}
RVG~\cite{zhang2025rvg} uses $N$ orientation layers with intervals and representative angles ($k=0,\ldots,N-1$)
\begin{equation}
I_k=\left[\frac{2\pi k}{N},\frac{2\pi(k+1)}{N}\right],
\qquad \hat\theta_k=\frac{(2k+1)\pi}{N},
\label{eq:orientation-layers}
\end{equation}
modulo $2\pi$. A conservative polygonal envelope $\mathcal B_k$ contains the robot throughout $I_k$:
\begin{equation}
\bigcup_{\theta\in I_k}\mathsf R_\theta(R-c)
\subseteq\mathcal B_k.
\label{eq:layer-envelope}
\end{equation}
As $N\to\infty$, the envelope converges to the exact swept region over $I_k$.

In each layer, RVG grows obstacles by a Minkowski construction using $\mathcal B_k$ and builds a reduced visibility graph in the free translation domain at angle $\hat\theta_k$. Within-layer edges encode translations; same-position links between adjacent layers encode collision-free rotations. Edge weights follow Eq.~\eqref{eq:path-cost}. Let $\Gfull=(V_N,E_N)$ be the full-map RVG after connecting $q_s$ and $q_g$ by collision-free query edges.

\subsection{Observation Model}
Let $\operatorname{Vis}(z)$ denote closed free-space visibility from $z$, with workspace-wide range and no robot self-occlusion. Center and vertex scans are
\begin{equation}
S_c(q)=\operatorname{Vis}(p),\qquad
S_V(q)=\bigcup_{z\in\operatorname{vert}(R(q))}\operatorname{Vis}(z).
\label{eq:footprint-observation}
\end{equation}
The footprint scan is $S(q)=S_c(q)\cup S_V(q)$. At a fixed position, only vertex scans depend on orientation. Unobserved obstacle geometry remains unknown.

At a collision-free pose, $R(q)\subseteq S_V(q)\subseteq S(q)$: triangulating $R(q)$ places every footprint point in a triangle of robot vertices, from which it is visible. In contrast, an obstacle can hide part of a nonconvex footprint from a center scan, as illustrated by the U-shaped robot in Fig.~\ref{fig:footprint-sensing}.

\section{Dynamic RVG}\label{sec:algorithm}
\begin{figure}[t]
  \centering
  \subfloat[Select a sensing goal]{\includegraphics[angle=90,width=0.31\columnwidth]{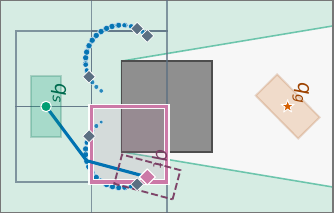}}\hfill
  \subfloat[Add RVG sites]{\includegraphics[angle=90,width=0.31\columnwidth]{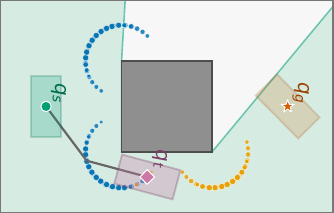}}\hfill
  \subfloat[Select next goal]{\includegraphics[angle=90,width=0.31\columnwidth]{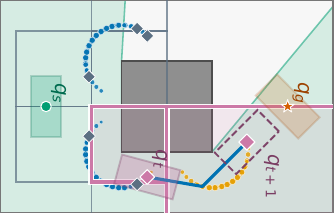}}
  \caption{Two consecutive sensing-goal selections ($N=36$). (a) The quadtree built from the start observation selects $q_t$ (purple diamond), with its known-free route from $q_s$ shown in blue. (b) After reaching $q_t$, the robot scans and builds a local RVG patch: blue dots are existing sites and orange dots are newly added sites. (c) The quadtree incorporates both sets before selecting $q_{t+1}$; the region served at $q_t$ remains deferred during the current epoch. Diamonds denote leaf representatives; purple highlights selected representatives and served regions. The blue curve in (c) connects the two sensing goals.}
  \label{fig:overview}
\end{figure}

\begin{figure}[t]
  \centering
  \subfloat[Single origin]{\includegraphics[width=0.35\columnwidth]{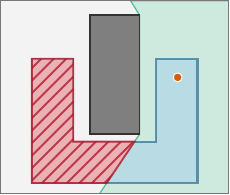}}\hspace{10mm}
  \subfloat[Footprint scan]{\includegraphics[width=0.35\columnwidth]{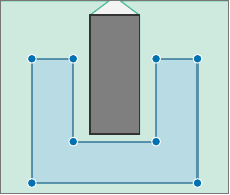}}
  \caption{For a robot that is not star-shaped about its sensing center, a center scan can leave part of the footprint unobserved. (a) The obstacle in the U-shaped robot's concavity hides the hatched part of the footprint from a single origin (orange). (b) Scans from the footprint vertices (blue) collectively cover both arms and the connecting body, providing the full-footprint coverage missing from the single-origin scan. Green denotes observed free space; gray denotes obstacles.}
  \label{fig:footprint-sensing}
\end{figure}

\subsection{Sense--Construct--Merge--Move}
We maintain observed free space $\mathcal K$, roadmap $\mathcal G$, scanned sites $E$, and sensing quadtree $\mathcal T$.

\emph{Sense.} At pose $q$, initially $q_s$, the robot takes a center scan $S_c(q)$ or footprint scan $S(q)$. We add the observation $S$ to $\mathcal K$ and the current site to $E$.

\emph{Construct.} We build $\mathcal G^{\rm loc}$ in the scan domain $S$ at fixed $N$. Layer envelopes grow obstacles by Minkowski sums; visibility edges and same-position links between adjacent layers encode collision-free translations and rotations. We insert $q$ and, when feasible, $q_g$ through collision-free query edges. Goal-position visibility alone does not establish configuration feasibility.

\emph{Merge.} We merge $\mathcal G^{\rm loc}$ into the accumulated roadmap $\mathcal G$, retaining existing vertices and edges as well as local components that are not yet reachable. Under Assumption~\ref{ass:star-exact}, a reached temporary goal is shared by the accumulated and new local graphs: its layer envelope lies in both observed domains (Lemma~\ref{lem:anchored-merge}). Its local component therefore joins the accumulated start component. Routes combine edges from multiple scans; static obstacles keep earlier edges valid.

\emph{Move.} A shortest-path search checks whether $q_g$ is reachable in $\mathcal G$. If so, the robot follows that route and finishes. Otherwise, the quadtree selects a reachable configuration at an unscanned site as a temporary sensing goal. The robot executes its route and senses again on arrival. Algorithm~\ref{alg:drvg} repeats these four steps until the goal is reached, no reachable unscanned site remains, or the planning-time budget expires. The last two outcomes return failure and timeout, respectively; the completeness analysis concerns the loop without a time limit.

\subsection{Quadtree-Based Selection of Temporary Sensing Goals}
RVG can place multiple orientation configurations at one position. Selecting sensing goals independently by configuration may repeatedly favor the same region.

We first group configurations at the same position into a \emph{spatial site} $s$. The list $\mathcal Q(s)$ keeps all of its orientation configurations, and $\mathrm{site}(v)$ gives the site of configuration $v$. Once the robot reaches and scans one configuration, its site enters $E$ and is no longer a sensing target. Its configurations remain in $\mathcal G$, so later routes can still pass through that position at different orientations. For center scans, all configurations at one position share the same observation. For the footprint-scan experiments, observations may vary with orientation, so scanning each site once is a scheduling choice.

We store sites in a quadtree $\mathcal T$ over $\W$. To prevent new sites from immediately drawing the robot back to a recently scanned region, we schedule leaves in rounds called \emph{epochs}. A leaf is eligible when it contains a reachable, unscanned site. After a leaf supplies a sensing goal, it is marked served and set aside while other eligible leaves remain unserved. When every eligible leaf has been served, a new epoch makes them available again. Scanned sites remain in $E$ across epochs; revisiting a region requires a new site. Exploration stops when none is reachable.

The default policy scores each reachable candidate $v$ by its position and orientation relative to the goal:
\begin{equation}
h(v)=\alpha\|p_v-p_g\|_2+
\beta\,|\operatorname{wrap}(\theta_v-\theta_g)|,
\label{eq:selector-score}
\end{equation}
where $p_v,p_g$ and $\theta_v,\theta_g$ are the candidate and goal positions and orientations, $\alpha,\beta$ are the cost weights in Eq.~\eqref{eq:path-cost}, and $\operatorname{wrap}$ gives the angular difference in $[-\pi,\pi]$. Each available leaf nominates its lowest-score reachable configuration, considering all orientations at its unscanned sites; the best nominee becomes the temporary goal. The score ranks sensing goals; execution follows a shortest path in $\mathcal G$. The $g+h$ variant also accounts for this route cost.

For Algorithm~\ref{alg:quadtree}, Dijkstra search from $q$ provides graph distances $d(v)$ and predecessors $\pi$, with $d(v)=\infty$ for unreachable configurations; an absent goal vertex also has infinite distance. Let $\mathcal C$ contain the unscanned sites, excluding the current site.

\begin{samepage}
Let $\ell$ denote the square region of a quadtree leaf. The reachable candidates in this region are
\begin{equation}
\mathcal A_\ell=\{v\in\mathcal Q(s):s\in\mathcal C,\ p_v\in\ell,\ d(v)<\infty\}.
\label{eq:leaf-candidates}
\end{equation}
\end{samepage}
The set $\mathcal L_{\rm elig}$ collects leaves with nonempty $\mathcal A_\ell$, and $\mathcal L$ keeps those not yet served in the current epoch. For the $g+h$ variant, both nomination steps use $d(v)+h(v)$.

If the goal is not yet reachable, we update the quadtree with the current unscanned sites and their orientation configurations, retaining served status across updates. When a leaf splits, the child containing its previously selected sensing site inherits its served status. The algorithms denote the selected configuration by $q'$. Figure~\ref{fig:overview} shows successive goals $q_t$ and $q_{t+1}$, with the region served at $q_t$ deferred after incorporating the new scan.

\begin{algorithm}[t]
\small
$q\leftarrow q_s$; $\mathcal K,\mathcal G,E\leftarrow\emptyset$;
$\mathcal T\leftarrow\mathrm{Quadtree}(\W)$\;
\While{$q\ne q_g$ and planning time remains}{
$S\leftarrow\mathrm{Sense}(q)$; $\mathcal K\leftarrow\mathcal K\cup S$\;
$E\leftarrow E\cup\{\mathrm{site}(q)\}$\;
$\mathcal G^{\rm loc}\leftarrow\mathrm{RVG}(S,N;q,q_g)$\;
$\mathcal G\leftarrow\mathrm{Merge}(\mathcal G,\mathcal G^{\rm loc})$\;
$(q',\pi)\leftarrow\textsc{QuadtreeTarget}(\mathcal G,\mathcal T,q,q_g,E)$\;
\lIf{planning budget exhausted}{\Return timeout}
\If{$q'=\bot$}{\Return failure\;}
$q\leftarrow\mathrm{Execute}(\mathrm{RecoverPath}(\pi,q'))$\;
}
\lIf{$q=q_g$}{\Return the executed path}
\Return timeout\;
\caption{Online \dRVG($q_s,q_g,N$)}\label{alg:drvg}
\end{algorithm}

\begin{algorithm}[t]
\small
$(d,\pi)\leftarrow\mathrm{Dijkstra}(\mathcal G,q)$\;
\If{$d(q_g)<\infty$}{\Return $(q_g,\pi)$\;}
$\mathcal C\leftarrow\mathrm{UnscannedSites}(\mathcal G,E\cup\{\mathrm{site}(q)\})$\;
$\mathcal T.\mathrm{UpdateSites}(\mathcal C)$\;
Compute $\mathcal A_\ell$ for each leaf using Eq.~\eqref{eq:leaf-candidates}\;
$\mathcal L_{\rm elig}\leftarrow\{\ell\in\mathrm{Leaves}(\mathcal T):\mathcal A_\ell\ne\emptyset\}$\;
$\mathcal L\leftarrow\{\ell\in\mathcal L_{\rm elig}:\ell\text{ is unserved}\}$\;
\If{$\mathcal L=\emptyset$ and $\mathcal L_{\rm elig}\ne\emptyset$}{
$\mathcal T.\mathrm{NewEpoch}()$; $\mathcal L\leftarrow\mathcal L_{\rm elig}$\;
}
\If{$\mathcal L=\emptyset$}{\Return $(\bot,\pi)$\;}
\ForEach{$\ell\in\mathcal L$}{$r_\ell\leftarrow\argmin_{v\in\mathcal A_\ell}h(v)$\;}
$q'\leftarrow\argmin_{v\in\{r_\ell:\ell\in\mathcal L\}}h(v)$\;
$\mathcal T.\mathrm{MarkServed}(\operatorname{leaf}(q'))$\;
\Return $(q',\pi)$\;
\caption{\textsc{QuadtreeTarget}($\mathcal G,\mathcal T,q,q_g,E$)}\label{alg:quadtree}
\end{algorithm}

\section{Resolution-Completeness Analysis}\label{sec:analysis}
We analyze \dRVG with center scans, fixed $N$, and no time limit. All lemmas assume Assumption~\ref{ass:star-exact}; local roadmaps use $S_c(q)$.

\begin{figure}[t]
  \centering
  \includegraphics[width=0.7\columnwidth]{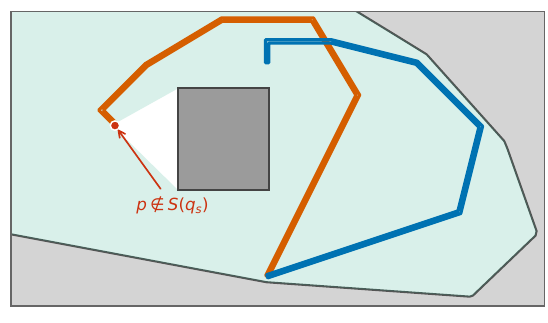}
  \caption{Geometric limit at $N=8$. Static RVG can rotate the non-star-shaped hook from the blue start footprint to the orange goal footprint. The initial footprint scan leaves the marked goal-footprint point unobserved and offers no temporary goal. Green is observed free space. This example lies outside Assumption~\ref{ass:star-exact}.}
  \label{fig:completeness-counterexample}
\end{figure}

\begin{samepage}
\begin{definition}[Relative resolution completeness]
For fixed $N$ and no time limit, a planner is resolution-complete relative to RVG if it reaches $q_g$ in finitely many steps whenever $\Gfull$ contains a $q_s$--$q_g$ path.
\end{definition}
\end{samepage}
Let $\Gfull^s=(V_N^s,E_N^s)$ be the connected component of $\Gfull$ containing $q_s$. A full-map vertex is \emph{represented} when its configuration is reachable from $q_s$ in the accumulated graph. A full-map edge is \emph{represented} by an accumulated-graph path between its endpoints; storing this path alone does not imply reachability from $q_s$.

\begin{assumption}[Star-shaped exact setting]\label{ass:star-exact}
The robot is star-shaped about $c$: $\overline{cr}\subseteq R$ for every $r\in R$. Every layer envelope $\mathcal B_k$ is star-shaped about the origin. Sensing, geometry, collision tests, RVG construction, and merging are exact in the bounded static polygonal workspace.
\end{assumption}

Local and full-map RVGs use the same orientation layers, conservative envelopes, visibility construction, and vertex propagation~\cite{zhang2025rvg}. Thus, a full-map edge has a corresponding local path whenever its entire conservative sweep is contained in the observed domain. The center observation is shared by all orientations at the scanned position, so the local graph can include connections for configurations not yet reachable from $q_s$. Merge retains these disconnected components for later use; each site is scanned at most once.

\begin{lemma}[Neighbor Reachability after Sensing]\label{lem:neighbor-revelation}
Let $u\in V_N^s$ be represented, and suppose its spatial site has been scanned. Every full-map neighbor $w$ of $u$ is then represented and connected to $u$.
\end{lemma}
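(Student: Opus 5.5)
The plan is to show that every full-map edge incident to $u$ has a conservative sweep that lies entirely inside the center scan $S_c(q_u)=\operatorname{Vis}(p_u)$ taken at $u$'s site. Once that holds, the correspondence stated just before Assumption~\ref{ass:star-exact} gives a local path for that edge. The local graph at the scanned site contains $u$ itself, since its position is the scan origin. The merge keeps this path, so we can prepend the $q_s$--$u$ path that makes $u$ represented. That shows $w$ is represented and connected to $u$.

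There are two kinds of full-map edges at $u=(p_u,k)$, and I would treat them as two cases.

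\emph{Case 1: same-position rotation links.} Here $w=(p_u,k\pm1)$, and the sweep is contained in $p_u+\mathcal B_k\cup\mathcal B_{k\pm1}$ (or the union over the two intervals). Full-map feasibility means this set has interior disjoint from $O$ and lies in $\W$. Take any point $x$ in it. Because $\mathcal B_j$ is star-shaped about the origin (Assumption~\ref{ass:star-exact}), the segment $\overline{p_u x}$ lies in the same translated envelope. That envelope is obstacle-free in its interior, so the segment avoids $\operatorname{int}O$, and $x\in\operatorname{Vis}(p_u)$. Visibility is closed, so boundary contact is admissible.

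\emph{Case 2: within-layer translation edges.} Here $w=(p_w,k)$, and $\overline{p_up_w}$ lies in the free translation domain of layer $k$. The sweep is $\bigcup_{t\in[0,1]}\bigl(p_t+\mathcal B_k\bigr)$ with $p_t=(1-t)p_u+tp_w$. For a point $x=p_t+b$ with $b\in\mathcal B_k$, I would show that $x$ is visible from $p_u$ via the triangle $\triangle(p_u,p_t,x)$. The side $\overline{p_up_t}$ is collision-free for the reference point. The side $\overline{p_tx}$ lies in $p_t+\mathcal B_k$ by star-shapedness. The swept family $\{p_s+\lambda b: s\in[0,t],\lambda\in[0,1]\}$ fills the triangle, and every such point lies in $p_s+\mathcal B_k$, again by star-shapedness of $\mathcal B_k$. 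The triangle is therefore contained in the swept region, which avoids $\operatorname{int}O$. In particular $\overline{p_ux}$ avoids $\operatorname{int}O$, and it lies in $\W$ because the sweep does. Hence the whole sweep lies in $S_c(q_u)$.

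\textbf{Main obstacle.} The delicate step is Case 2. I have to make sure that what the local RVG construction actually uses is exactly this sweep, and not, for example, visibility-graph vertices that propagate from obstacle corners outside the scan. I also have to check that the local path produced by the correspondence really starts at the configuration $u$ and not at a nearby duplicate. I would handle this by relying on the stated correspondence: local and full-map RVGs use the same envelopes and vertex propagation, with exact merging identifying configurations at equal positions and layers. Vertices of the local graph that correspond to $u$ and $w$ are therefore identified with them, and the merge then joins the components.
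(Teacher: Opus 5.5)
\textbf{Gap: the case analysis omits query attachment edges.} For the two edge types you treat, your argument is the paper's. The rotation link is handled by the union of two envelopes, each star-shaped about $p_u$. Your triangle argument is the same computation as the paper's $p+\mu(z-p)=p+\mu b+(\mu\lambda)d$. (The family $\{p_s+\lambda b\}$ is really a parallelogram containing your triangle, with $\overline{p_u x}$ as its diagonal, but the conclusion stands.)

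The gap is your claim that a full-map edge at $u$ is either a same-position rotation link or a within-layer translation. $\Gfull$ is defined as the full-map RVG \emph{after} connecting $q_s$ and $q_g$ by query edges, and these edges are of neither kind. Their query endpoint has an arbitrary orientation in $I_k$ rather than $\hat\theta_k$. Each such edge combines a within-interval rotation at the query position with a translation. This is not a side case. The induction in Lemma~\ref{lem:finite-component} starts at $q_s$, whose outgoing edges are start attachments. Theorem~\ref{thm:resolution-complete} needs the lemma precisely for the final goal-attachment edge that makes $q_g$ represented. As written, your proof never shows that $q_g$ becomes represented.

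The repair is short, and it is what the paper does. Take a query at $p_a$ with orientation in $I_k$, attached to a layer-$k$ vertex at $p_b$. By Eq.~\eqref{eq:layer-envelope}, the query footprint and the rotation between its orientation and $\hat\theta_k$ both lie in $p_a+\mathcal B_k$. So the whole attachment motion lies in $\bigcup_{t\in[0,1]}(p_a+\mathcal B_k+t(p_b-p_a))$. This holds whether the rotation comes first (start attachment) or last (goal attachment). This set is symmetric in its endpoints, so your Case~2 argument shows it is star-shaped about whichever endpoint is $u$. That endpoint is $p_a$ when $u=q_s$, and $p_b$ when $w=q_g$. The sweep therefore lies in $\operatorname{Vis}(p_u)$. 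The local graph's goal insertion (``when feasible'') then creates the corresponding edge, and the rest of your argument goes through unchanged.
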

\begin{proof}
A layer-$k$ translation from position $p$ by displacement $d$ sweeps points $z=p+b+\lambda d$, with $b\in\mathcal B_k$ and $\lambda\in[0,1]$. For $\mu\in[0,1]$,
\begin{equation}
p+\mu(z-p)=p+\mu b+(\mu\lambda)d \in\bigcup_{t\in[0,1]}\left(p+\mathcal B_k+t d\right),
\label{eq:star-translation-certificate}
\end{equation}
because $\mu b\in\mathcal B_k$ and $\mu\lambda\in[0,1]$. Thus the translation sweep is star-shaped about $p$. A same-position rotation between adjacent layers uses $(p+\mathcal B_k)\cup(p+\mathcal B_{k+1})$. Both envelopes are star-shaped about $p$, so their union is as well. For RVG's layer-based query insertion~\cite{zhang2025rvg}, a query at $p_a$ with orientation in $I_k$ connects through that layer. It can rotate to $\hat\theta_k$ at $p_a$ and translate to $p_b$; a goal attachment reverses this order. Both motions lie in $\bigcup_{t\in[0,1]}(p_a+\mathcal B_k+t(p_b-p_a))$, since the query footprint and within-layer rotation lie in $p_a+\mathcal B_k$. This sweep is star-shaped about either endpoint by the translation argument. Thus these attachments satisfy the same visibility certificate.

For a full-map edge $(u,w)$ with $u$ at $p$, the collision-free conservative sweep is therefore visible from $p$. A center scan contains this sweep at any scanning orientation, so the local graph contains a path joining $u$ and $w$.
Merge retains this path even if $u$ is initially unreachable. Once $u$ becomes reachable, so does $w$, without another scan.
\end{proof}

\begin{samepage}
\begin{lemma}[Reachability Preservation]\label{lem:anchored-merge}
Initialization establishes the start component. Each reached temporary goal joins its new local component to it. Observed free space and reachability are nondecreasing, but may remain unchanged after a scan.
\end{lemma}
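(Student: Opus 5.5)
The argument is an induction on the number of sense–construct–merge–move iterations of Algorithm~\ref{alg:drvg}, with the invariant that every configuration reachable from $q_s$ in $\mathcal G$ remains reachable after the next iteration, and that $\mathcal K$ only grows. The base case is initialization. The first scan is taken at $q_s$, and we must show that $q_s$ itself is inserted into $\mathcal G^{\rm loc}$. By Assumption~\ref{ass:star-exact} the robot is star-shaped about $c$, and $c$ sits at $p_s$. The envelope of the layer containing $\theta_s$ is star-shaped about the origin, so every point of $p_s+\mathcal B_k$ is visible from $p_s$ whenever that envelope is collision-free. The query-insertion argument in the proof of Lemma~\ref{lem:neighbor-revelation} then shows that the query attachment of $q_s$ lies in $S_c(q_s)$. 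Hence $q_s$ is a vertex of $\mathcal G^{\rm loc}$, and its component after the first merge is the start component.

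For the inductive step, suppose the robot reaches a temporary goal $q'$ returned by Algorithm~\ref{alg:quadtree}. By construction $d(q')<\infty$, so $q'$ already lies in the start component of $\mathcal G$. The robot scans at $p'$ and builds $\mathcal G^{\rm loc}$, and by the same star-shaped certificate $q'$ is inserted into $\mathcal G^{\rm loc}$: its layer envelope $p'+\mathcal B_k$ is visible from $p'$, and it is also contained in the earlier observed domain where $q'$ was created. Merge identifies vertices with equal configurations, so $q'$ is a shared vertex. The whole local component of $q'$ therefore becomes attached to the start component. Monotonicity follows because Merge only adds vertices and edges and $\mathcal K\leftarrow\mathcal K\cup S$ only grows. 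Static obstacles and exact sensing guarantee that previously certified edges stay collision-free, so no reachable vertex is lost.

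For the last clause, it suffices to note that a new scan may add no free space (for example, $S_c(q')\subseteq\mathcal K$). In that case the local component of $q'$ contains only edges already represented, so reachability is unchanged. This shows that the lemma claims only monotonicity and not strict progress.

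The main obstacle is justifying that the merge really identifies $q'$ in both graphs. Exact configuration matching requires that RVG vertex generation at $p'$ reproduces the same layer configuration, so that the local and accumulated copies of $q'$ coincide. I would resolve this by appealing to exactness in Assumption~\ref{ass:star-exact} and to the fact that the scanned position is explicitly inserted as the query $q$ in $\mathrm{RVG}(S,N;q,q_g)$, so the vertex $q'$ appears verbatim in $\mathcal G^{\rm loc}$.
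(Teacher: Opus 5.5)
Your argument follows the paper's proof. The start attachment comes from the star-shaped visibility certificate. In the inductive step, the reached goal's envelope $p'+\mathcal B_k$ lies in $\mathcal K_{\rm old}$ and is visible from $p'$, so $q'$ is feasible in the new local layer and its two copies merge. Monotonicity comes from retention and static obstacles. The paper gets $p'+\mathcal B_k\subseteq\mathcal K_{\rm old}$ from the sweep of the incoming route edge; you get it from the scan domain in which $q'$ was created. Either works, but note that this containment is what makes the envelope collision-free, which your visibility claim needs first.

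Two steps need tightening.

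In the base case you show only that $p_s+\mathcal B_k$ is visible from $p_s$ \emph{whenever} it is collision-free, and then you conclude unconditionally that $q_s$ is attached. The fact that $q_s\in\Cfree$ does not make the conservative envelope over all of $I_k$ collision-free, let alone the attachment translation. Moreover, the query-insertion argument of Lemma~\ref{lem:neighbor-revelation} applies only to an attachment that already exists in $\Gfull$. You need the paper's hypothesis that a full-map solution exists, so that $\Gfull$ attaches $q_s$. Under that hypothesis the attachment's sweep is collision-free and star-shaped about $p_s$, hence lies in $S_c(q_s)$.

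For the final clause, $S_c(q')\subseteq\mathcal K$ does not imply that the new local graph adds only already-represented edges. Earlier local graphs were built in individual scan domains, not in $\mathcal K$. A sweep covered only by the union of several earlier scans, but contained in $S_c(q')$, can therefore produce a genuinely new edge, and possibly new reachability, without any new free space. The clause only asserts that growth is not guaranteed, so drop that implication and support it with a concrete instance; the paper itself merely asserts it.
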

\end{samepage}
\begin{proof}
If a full-map solution exists, its first query attachment has a sweep visible from $q_s$ by the preceding argument, so its query configuration is feasible in the initial scan. Now let $q$ be a reached temporary goal in layer $k$, at $p$. The incoming roadmap motion's conservative sweep contains the endpoint envelope $p+\mathcal B_k$ and lies in $\mathcal K_{\rm old}$. Since this envelope is collision-free and star-shaped about $p$, it is visible from $p$. Hence $p+\mathcal B_k\subseteq\mathcal K_{\rm old}\cap S_c(q)$, so $q$ is feasible in the new local layer. Merging its local and accumulated copies joins the new component to the start. Earlier vertices and edges remain valid and are retained, preserving reachability. A scan need not add new space or connections.
\end{proof}

\begin{lemma}[Quadtree Reordering]\label{lem:quadtree-neutral}
The quadtree changes only sensing order. Every persistent reachable unscanned site is selected after finitely many successful temporary-goal moves unless the true goal is reached first.
\end{lemma}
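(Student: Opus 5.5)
The lemma has two parts, and I will treat them separately. The first part says the quadtree changes only the sensing order. To show this, I will note that \textsc{QuadtreeTarget} reads $\mathcal G$, $d$, $E$, and the served flags, but never modifies $\mathcal G$. So the quadtree only chooses which reachable unscanned site is visited next. The roadmap is then built by Sense, Construct, and Merge, and the invariants of Lemma~\ref{lem:anchored-merge} and Lemma~\ref{lem:neighbor-revelation} do not depend on which target was chosen. In particular, all orientation configurations in $\mathcal Q(s)$ stay in $\mathcal G$ whether or not their leaf has been served.

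For the second part, fix a site $s$ that is \emph{persistent}: from some iteration $t_0$ onward, $s$ is unscanned and has a configuration $v$ with $d(v)<\infty$. By Lemma~\ref{lem:anchored-merge}, reachability is nondecreasing, so $v$ stays reachable after $t_0$. The argument then goes through the following steps.
\begin{enumerate}
\item Each successful move marks exactly one leaf served and scans exactly one new site. Only finitely many sites can ever be scanned while $s$ remains a candidate, but I will not use that bound directly.
\item Let $\ell(t)$ be the leaf containing $p_v$ at iteration $t$. Splits only refine the tree, and the workspace is compact. Every split is triggered by inserting distinct site positions. I will argue that the sites themselves form a finite set, since RVG vertices are reflex vertices of the finitely many grown obstacles over $N$ layers, plus $q_s$ and $q_g$. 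Hence the quadtree stabilises after finitely many updates, and so does $\ell(t)\equiv\ell$.
\item After stabilisation there are finitely many leaves. Within an epoch each eligible leaf is served at most once, and the epoch ends only when every eligible leaf has been served. Because $v$ is persistent, $\ell$ is eligible throughout. The inheritance rule for served status under splits does not matter once the tree is fixed. So within at most (number of leaves)$+1$ selections, $\ell$ is either served in the current epoch or a new epoch starts and it is served in that one.
\item When $\ell$ is served, the chosen target is some configuration of some unscanned site in $\ell$, possibly not $s$. That site is then scanned and removed from $\mathcal C$. Since $\ell$ contains only finitely many sites, repeating this finitely many times either selects $s$ itself or exhausts the others, after which $v$ is the only candidate in $\mathcal A_\ell$ and is nominated.
\end{enumerate}
Together these steps give a finite bound on the number of successful moves before $s$ is selected, unless $q_g$ becomes reachable first, in which case Algorithm~\ref{alg:quadtree} returns $q_g$ immediately.

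The main obstacle is the finiteness used in steps 2 and 4. The set of possible sites must be finite, so that the quadtree eventually stops splitting and each leaf holds finitely many sites. I would derive this from Assumption~\ref{ass:star-exact}. The workspace and obstacles are polygonal with finitely many features, so each layer's grown-obstacle boundary has finitely many vertices. Center-scan local RVGs, built in observed subsets, add site positions only at these full-map vertices, at query positions, or at finitely many intersections of scan boundaries with them. Since each site is scanned at most once, there are finitely many scans and hence finitely many scan-boundary vertices. Once finiteness is in hand, the epoch-counting in steps 3 and 4 is routine.
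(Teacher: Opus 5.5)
\textbf{Gap: the finiteness argument is circular.} Your overall structure is fine, but the step you flag as the main obstacle is circular, and steps~2 and~4 rest on it. You let center scans add site positions at intersections of scan boundaries with grown obstacles. You then bound these by saying each site is scanned at most once, hence there are finitely many scans, hence finitely many scan-boundary vertices. But ``finitely many scans'' follows only if the site set is already known to be finite, which is the claim you are proving. Nothing breaks this loop. Each newly scanned position casts new windows whose endpoints are new points, and in your account these could seed new sites, further scans, and so on without end.

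The bound is not a technicality. Without it, step~4 has no force: leaf $\ell$ could be served in every epoch while continually acquiring new sites whose nominees have lower $h$ than every configuration of $s$. Then $s$ would never be nominated.

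\textbf{Missing idea: scan boundaries contribute no roadmap positions.}
\begin{itemize}
\item A center-scan window ends where a ray from the scan position meets the interior of an edge. The observed free-space angle there is at most $180^\circ$, so that point is not a reflex vertex of the scan domain. Hence every reflex vertex of any center-scan domain is an obstacle corner or an outer-boundary corner.
\item A convex corner of a Minkowski sum of polygons requires convex corners of both summands. So each convex corner of a layer's grown complement combines such a map corner with a vertex of $\mathcal B_k$.
\item Propagation preserves positions, and query insertion uses only $q_s$, $q_g$, or previously selected sites.
\end{itemize}
Together these put all roadmap positions, across all scans, in a finite set fixed in advance and independent of the scan history. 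This is how the paper argues.

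With that bound the conclusion is direct. Each successful move scans a fresh site, so only finitely many moves can occur. While $s$ remains reachable and unscanned, Algorithm~\ref{alg:quadtree} never returns $\bot$, because it resets the epoch if needed. So $s$ must be selected unless $q_g$ is reached first.

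Your epoch counting in steps~3--4 then becomes unnecessary, though it is valid in substance once finiteness holds. One small correction there: the bound is closer to twice the number of leaves, since $\ell$ may already have been served in the current epoch.
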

\begin{proof}
A center scan can create new boundary vertices where visibility rays meet the interior of straight edges. The observed free-space angle at each such intersection is at most $180^\circ$, so it is not a reflex vertex. Every reflex vertex therefore coincides with an obstacle corner or a corner of the outer workspace boundary. Convex corners of the grown complement require convex corners of both Minkowski summands; otherwise a translated input contains a segment through the corner. For fixed $N$, these finite sets of map and envelope vertices yield finitely many roadmap positions across all center scans. Propagation preserves positions; query insertion uses the fixed start and goal or previously selected sites. Each temporary-goal move scans a previously unscanned site. If an eligible leaf remains, Algorithm~\ref{alg:quadtree} selects a site, resetting the epoch when needed. Finite candidates and monotone reachability preclude indefinite deferral of persistent reachable sites.
\end{proof}

\begin{lemma}[Start-Component Representation]\label{lem:finite-component}
Unless the goal is reached first, every vertex of $\Gfull^s$ becomes reachable from $q_s$ after finitely many temporary-goal moves.
\end{lemma}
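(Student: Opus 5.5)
Our plan is to argue by induction along paths of $\Gfull^s$, combining Lemmas~\ref{lem:neighbor-revelation}, \ref{lem:anchored-merge}, and \ref{lem:quadtree-neutral}. Since $\Gfull^s$ is a connected component of a finite graph (finitely many obstacle and envelope vertices at fixed $N$), every vertex $w\in V_N^s$ is joined to $q_s$ by a finite path $q_s=u_0,u_1,\dots,u_m=w$ in $\Gfull^s$. We prove by induction on $j$ that each $u_j$ becomes represented after finitely many temporary-goal moves, unless the goal is reached first. The base case is $u_0=q_s$. It is represented by initialization, since Lemma~\ref{lem:anchored-merge} establishes the start component and the site of $q_s$ is placed in $E$ at the first sensing step.

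For the inductive step, suppose $u_j$ is represented after finitely many moves. By Lemma~\ref{lem:anchored-merge}, reachability is nondecreasing, so $u_j$ stays represented from then on. There are two cases. If the spatial site of $u_j$ has already been scanned, Lemma~\ref{lem:neighbor-revelation} shows at once that $u_{j+1}$ is represented. Otherwise the site of $u_j$ is unscanned and contains the reachable configuration $u_j$, so it lies in $\mathcal A_\ell$ for its leaf. It remains a candidate until it is scanned, which makes it a persistent reachable unscanned site. Lemma~\ref{lem:quadtree-neutral} then guarantees that this site is selected after finitely many further successful moves, unless the goal is reached first. When the robot reaches it and scans, Lemma~\ref{lem:anchored-merge} joins the new local component to the start component. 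Lemma~\ref{lem:neighbor-revelation} then gives that $u_{j+1}$ is represented and connected to $u_j$. Note that the selected configuration may be a different orientation at the same site. This does not matter, because a center scan is orientation-independent and scanning the site is all Lemma~\ref{lem:neighbor-revelation} requires.

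Each step of the induction uses finitely many moves and the path has finite length, so each $w$ is represented after finitely many moves. Because $V_N^s$ is finite, taking the maximum over all vertices gives a single finite bound. Monotonicity ensures that vertices represented earlier are not lost.

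The main obstacle is the step that uses Lemma~\ref{lem:quadtree-neutral}. Two points must be checked. First, every successful move must really scan a new site, so the process cannot stall without progress; failure could only occur if no reachable unscanned site remained, which contradicts the presence of $u_j$'s site. Second, the finiteness of candidate positions must keep the epoch mechanism from postponing the site of $u_j$ indefinitely. A cleaner way to handle both points is to count: at most finitely many sites ever exist, and each move consumes one of them. Hence, unless the goal is reached, after finitely many moves every site that ever becomes reachable has been scanned, including the site of every $u_j$. This count also avoids reasoning about the detailed epoch order.
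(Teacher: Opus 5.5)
Your proposal is correct and follows the paper's proof. Both induct outward from $q_s$; you index along a path where the paper uses full-map distance levels, which is equivalent. Both use the same case split (scanned versus unscanned site of the represented neighbor), resolved by Lemma~\ref{lem:neighbor-revelation} and Lemma~\ref{lem:quadtree-neutral}, with Lemma~\ref{lem:anchored-merge} supplying monotonicity. Your closing counting argument is the finite-candidate reasoning the paper already places inside the proof of Lemma~\ref{lem:quadtree-neutral}, so it is a valid shortcut rather than a different route.
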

\begin{proof}
Induct on full-map edge distance from $q_s$, initially represented and scanned. Each vertex $w$ at the next level has a represented neighbor $u$.

If $u$'s site has been scanned, Lemma~\ref{lem:neighbor-revelation} makes $w$ reachable using the stored connections. Otherwise, $u$ remains a reachable sensing candidate, so Lemma~\ref{lem:quadtree-neutral} ensures that its site is scanned after finite delay. The same neighbor lemma then makes $w$ reachable. Lemma~\ref{lem:anchored-merge} keeps the vertices from earlier levels reachable. The finite number of distance levels completes the induction.
\end{proof}

\begin{theorem}[Fixed-resolution RVG-relative completeness]\label{thm:resolution-complete}
Under Assumption~\ref{ass:star-exact}, quadtree-guided \dRVG with center scans is resolution-complete relative to RVG. If the goal remains unreachable, exhausting all reachable unscanned sites certifies that $q_s$ and $q_g$ are disconnected in $\Gfull$.
\end{theorem}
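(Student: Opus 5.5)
The proof splits into two claims: completeness when $\Gfull$ contains a $q_s$--$q_g$ path, and the failure certificate when the goal stays unreachable. Both follow from Lemma~\ref{lem:finite-component} together with the finiteness argument inside Lemma~\ref{lem:quadtree-neutral}.

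\emph{Completeness.} Suppose $\Gfull$ contains a $q_s$--$q_g$ path, so $q_g\in V_N^s$. The first step is to rule out premature termination. Algorithm~\ref{alg:drvg} can only return failure when Algorithm~\ref{alg:quadtree} finds no eligible leaf, i.e., no reachable unscanned site. While some vertex of $\Gfull^s$ is still unrepresented, there is a full-map path from a represented vertex to it, whose first edge starts at a represented vertex $u$ with an unrepresented neighbor. By Lemma~\ref{lem:neighbor-revelation}, $u$'s site cannot already be scanned, so $u$ is a reachable unscanned candidate. The one subtlety is when $u$ sits at the current site, which is excluded from $\mathcal C$; but the current site is scanned at the top of each loop iteration, so this case is again ruled out by Lemma~\ref{lem:neighbor-revelation}.

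Hence, as long as $q_g$ is unrepresented, a target exists and the loop continues. By Lemma~\ref{lem:quadtree-neutral}, each iteration scans a new site from a finite set of roadmap positions, so there can be only finitely many iterations. Lemma~\ref{lem:finite-component} then shows that $q_g$ becomes reachable after finitely many moves. At that point Algorithm~\ref{alg:quadtree} returns $q_g$, and the robot executes a route of finitely many edges. Each executed edge is collision-free because $\mathcal G$ contains only edges certified in observed free space, and Lemma~\ref{lem:anchored-merge} guarantees these edges are preserved. So the robot reaches $q_g$ in finitely many steps.

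\emph{Certificate.} Argue by contrapositive. Suppose the algorithm returns failure, meaning no reachable unscanned site remains, and yet $q_g\in V_N^s$. Take the earliest vertex on a full-map $q_s$--$q_g$ path that is not represented, and let $u$ be its predecessor, which is represented. If $u$'s site has been scanned, Lemma~\ref{lem:neighbor-revelation} says its successor is represented, which is a contradiction. Otherwise $u$ is a reachable unscanned site, contradicting failure. Therefore every vertex on the path is represented, including $q_g$, so $d(q_g)<\infty$ and the goal would have been returned rather than failure. This shows failure certifies that $q_s$ and $q_g$ are disconnected in $\Gfull$. Termination of the exhaustion itself comes from the finite site set in Lemma~\ref{lem:quadtree-neutral}.

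\emph{Main obstacle.} I expect the hardest step to be making the finiteness rigorous. Lemma~\ref{lem:quadtree-neutral} gives finitely many roadmap positions over all center scans, and I would also use the fact that each site is scanned at most once. Together these bound the number of temporary-goal moves, which is what excludes infinite sequences of sensing moves that never reveal new structure. A related point needs care: the goal's query insertion, which depends on observing the goal attachment sweep, must be handled by the same star-shaped visibility certificate used in Lemma~\ref{lem:neighbor-revelation}. I would make this explicit by treating the edges incident to $q_g$ as ordinary full-map edges in the induction.
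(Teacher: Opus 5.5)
Your proposal is correct and follows the paper's route. Completeness comes from Lemma~\ref{lem:finite-component}, with finiteness from Lemma~\ref{lem:quadtree-neutral}, and the certificate is the contrapositive of completeness; you spell out steps the paper leaves implicit, namely ruling out premature failure with the frontier argument via Lemma~\ref{lem:neighbor-revelation}, and deriving the certificate directly at the failure state rather than by appeal to the completeness claim.
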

\begin{proof}
A full-map solution places $q_g$ in $V_N^s$. Lemma~\ref{lem:finite-component} makes it reachable after finitely many moves, and the goal test in Algorithm~\ref{alg:quadtree} returns its collision-free route. Exhaustive failure in the presence of such a path would contradict this result. The certificate concerns fixed-resolution RVG disconnection, not continuous-space infeasibility; timeout provides no certificate.
\end{proof}

\begin{corollary}[Resolution completeness]\label{cor:resolution-complete}
If Assumption~\ref{ass:star-exact} holds under angular refinement, a feasible query satisfying static RVG's clearance and envelope-convergence conditions~\cite{zhang2025rvg} admits a finite $N$ at which \dRVG with center scans reaches $q_g$ after finitely many temporary-goal moves.
\end{corollary}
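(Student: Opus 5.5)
The plan is to compose the static RVG resolution-completeness result of~\cite{zhang2025rvg} with Theorem~\ref{thm:resolution-complete}. First, I invoke the static guarantee. For a feasible query whose continuous path has the clearance required there, and whose conservative envelopes $\mathcal B_k$ converge to the exact swept regions as $N\to\infty$, it supplies a threshold $N_0$. For every $N\ge N_0$, $\Gfull$ built at resolution $N$ contains a $q_s$--$q_g$ path. Intuitively, once the envelope inflation falls below the clearance margin, every layer along a discretized version of the continuous path keeps the grown obstacles separated from that path. The reduced visibility graph and the same-position rotation links then reproduce it combinatorially. I take this as given, since it is exactly the cited static result and the corollary only asks us to assume those conditions.

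Second, I fix one such finite $N\ge N_0$. The hypothesis that Assumption~\ref{ass:star-exact} holds under angular refinement means that at this particular $N$ the robot is star-shaped about $c$, every $\mathcal B_k$ is star-shaped about the origin, and sensing, geometry and merging are exact. All lemmas of Section~\ref{sec:analysis} therefore apply at this resolution. Theorem~\ref{thm:resolution-complete} then shows that quadtree-guided \dRVG with center scans reaches $q_g$. In detail, Lemma~\ref{lem:finite-component}, supported by Lemmas~\ref{lem:neighbor-revelation}, \ref{lem:anchored-merge} and~\ref{lem:quadtree-neutral}, makes $q_g\in V_N^s$ reachable after finitely many temporary-goal moves. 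The goal test in Algorithm~\ref{alg:quadtree} then returns the route.

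The main obstacle is compatibility between the two sets of hypotheses at a common $N$. Refining envelopes to achieve convergence must not destroy the star-shapedness of the $\mathcal B_k$ about the origin. A convex-hull or polygonal over-approximation of the swept region could violate it, for instance if the construction inserts concavities. I would handle this explicitly. Because $R$ is star-shaped about $c$, each rotated copy $\mathsf R_\theta(R-c)$ is star-shaped about the origin, and a union of sets star-shaped about a common point is again star-shaped about it. Hence the exact swept region over $I_k$ is star-shaped, and one can choose envelopes that are star-shaped about the origin and still converge, for example radial polygonal outer approximations. This also justifies why the corollary states the assumption as holding ``under angular refinement'' rather than deriving it.

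A secondary check is that the query attachments of $q_s$ and $q_g$ used by static RVG coincide with the layer-based insertion analyzed in Lemma~\ref{lem:neighbor-revelation}. Both follow~\cite{zhang2025rvg}, so the fixed-$N$ path in $\Gfull$ is the same object that Theorem~\ref{thm:resolution-complete} consumes.
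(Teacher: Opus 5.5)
Your proposal matches the paper's proof: static RVG's resolution completeness yields a full-map $q_s$--$q_g$ path at some finite $N$, and Theorem~\ref{thm:resolution-complete} at that $N$ gives goal arrival after finitely many temporary-goal moves. Two parts are unnecessary, though harmless. Your star-shaped envelope discussion is not needed, since the corollary assumes Assumption~\ref{ass:star-exact} under refinement. Your threshold claim ``for every $N\ge N_0$'' is also more than required, because a single finite $N$ suffices.
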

\begin{proof}
Static RVG supplies a full-map path at finite $N$; Theorem~\ref{thm:resolution-complete} then ensures goal arrival.
\end{proof}

The geometric condition includes convex robots with a reference point in the robot and star-shaped nonconvex robots with suitable interval envelopes. Figure~\ref{fig:completeness-counterexample} demonstrates a failure of relative completeness at $N=8$: full-map RVG finds a solution, whereas the footprint-scan variant of \dRVG does not. In this example, finer angular resolutions allow additional sensing poses and a solution. \dRVG supports non-star-shaped polygonal robots, including those in Figures~\ref{fig:footprint-sensing} and~\ref{fig:completeness-counterexample}, although the completeness guarantee does not apply to them. Footprint scans may also introduce boundary intersections absent from center scans; the finite-candidate argument above does not establish completeness for that sensing variant.

\section{Evaluation}\label{sec:evaluation}
\begin{figure*}[!t]
  \centering
  \subfloat[]{\includegraphics[width=\dimexpr\textwidth/10-1pt\relax]{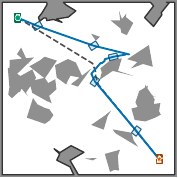}}\hfill
  \subfloat[]{\includegraphics[width=\dimexpr\textwidth/10-1pt\relax]{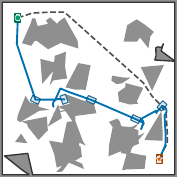}}\hfill
  \subfloat[]{\includegraphics[width=\dimexpr\textwidth/10-1pt\relax]{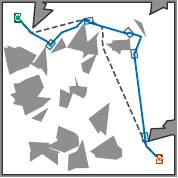}}\hfill
  \subfloat[]{\includegraphics[width=\dimexpr\textwidth/10-1pt\relax]{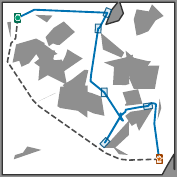}}\hfill
  \subfloat[]{\includegraphics[width=\dimexpr\textwidth/10-1pt\relax]{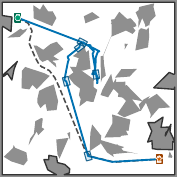}}\hfill
  \subfloat[]{\includegraphics[width=\dimexpr\textwidth/10-1pt\relax]{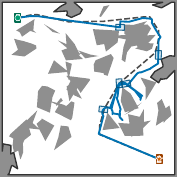}}\hfill
  \subfloat[]{\includegraphics[width=\dimexpr\textwidth/10-1pt\relax]{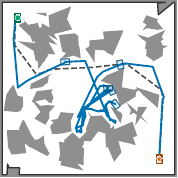}}\hfill
  \subfloat[]{\includegraphics[width=\dimexpr\textwidth/10-1pt\relax]{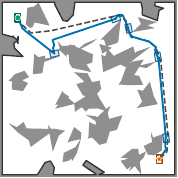}}\hfill
  \subfloat[]{\includegraphics[width=\dimexpr\textwidth/10-1pt\relax]{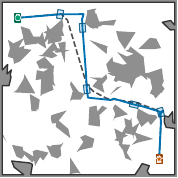}}\hfill
  \subfloat[]{\includegraphics[width=\dimexpr\textwidth/10-1pt\relax]{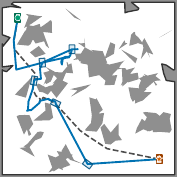}}
  \caption{Ten hard cases at $N=360$: executed \dRVG routes (blue) and full-map RVG routes (gray dashed). Footprints show robot orientation; green and orange mark start and goal.}
  \label{fig:routes}
  \setcounter{subfigure}{0}
  \subfloat[\dRVG]{\includegraphics[width=\dimexpr\textwidth/7-1pt\relax]{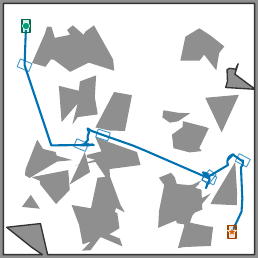}}\hfill
  \subfloat[$g+h$]{\includegraphics[width=\dimexpr\textwidth/7-1pt\relax]{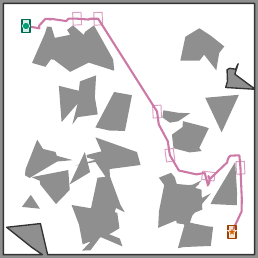}}\hfill
  \subfloat[A*]{\includegraphics[width=\dimexpr\textwidth/7-1pt\relax]{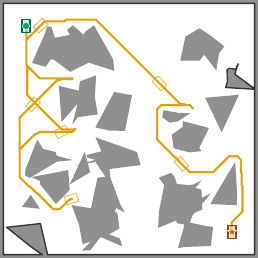}}\hfill
  \subfloat[D* Lite]{\includegraphics[width=\dimexpr\textwidth/7-1pt\relax]{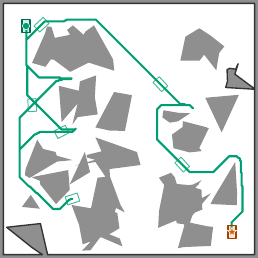}}\hfill
  \subfloat[$\mathrm{RRT}^{X}$]{\includegraphics[width=\dimexpr\textwidth/7-1pt\relax]{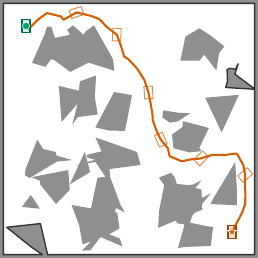}}\hfill
  \subfloat[EIT*]{\includegraphics[width=\dimexpr\textwidth/7-1pt\relax]{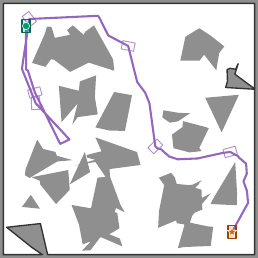}}\hfill
  \subfloat[Static RVG]{\includegraphics[width=\dimexpr\textwidth/7-1pt\relax]{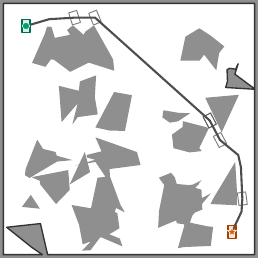}}
  \caption{Compared planners on one case at $N=72$. The occupancy-grid cell size is $\Delta x=0.10$; EIT* receives 100\,ms per fresh query. Colored footprints show execution; static RVG has the complete map.}
  \label{fig:method-routes}
  \vspace{-2mm}
\end{figure*}

\begin{figure*}[!t]
  \centering
  {\setlength{\tabcolsep}{1pt}\renewcommand{\arraystretch}{0}
  \begin{tabular}{@{}c c c c c@{}}
    & \small Success (\%) & \small Planning time (ms) & \small Cost $J/J_{\mathrm{RVG}}$ & \small Nodes / states \\[2pt]
    \raisebox{-.5\height}{\rotatebox{90}{\small $\Delta x=0.20$}} & \raisebox{-.5\height}{\includegraphics[width=0.235\textwidth,clip]{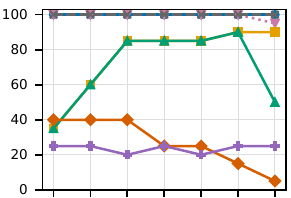}} & \raisebox{-.5\height}{\includegraphics[width=0.235\textwidth,clip]{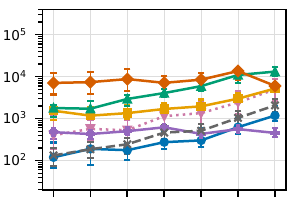}} & \raisebox{-.5\height}{\includegraphics[width=0.235\textwidth,clip]{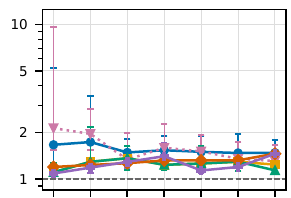}} & \raisebox{-.5\height}{\includegraphics[width=0.235\textwidth,clip]{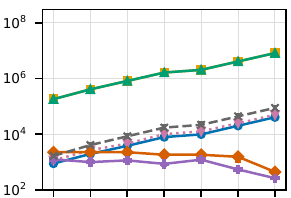}} \\[2pt]
    \raisebox{-.5\height}{\rotatebox{90}{\small $\Delta x=0.10$}} & \raisebox{-.5\height}{\includegraphics[width=0.235\textwidth,clip]{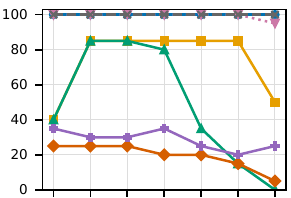}} & \raisebox{-.5\height}{\includegraphics[width=0.235\textwidth,clip]{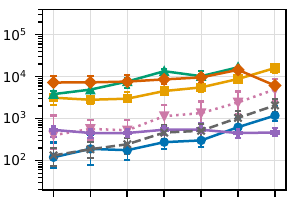}} & \raisebox{-.5\height}{\includegraphics[width=0.235\textwidth,clip]{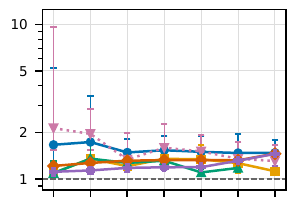}} & \raisebox{-.5\height}{\includegraphics[width=0.235\textwidth,clip]{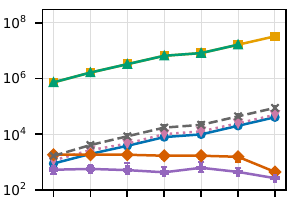}} \\[2pt]
    \raisebox{-.5\height}{\rotatebox{90}{\small $\Delta x=0.05$}} & \raisebox{-.5\height}{\includegraphics[width=0.235\textwidth,clip]{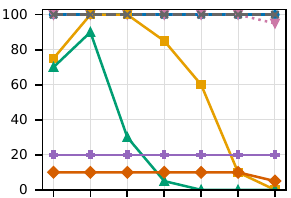}} & \raisebox{-.5\height}{\includegraphics[width=0.235\textwidth,clip]{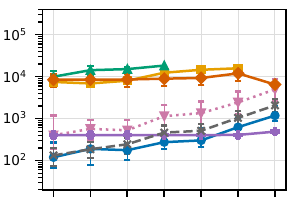}} & \raisebox{-.5\height}{\includegraphics[width=0.235\textwidth,clip]{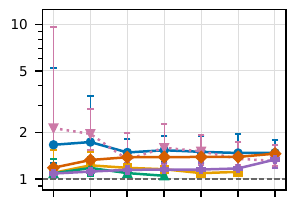}} & \raisebox{-.5\height}{\includegraphics[width=0.235\textwidth,clip]{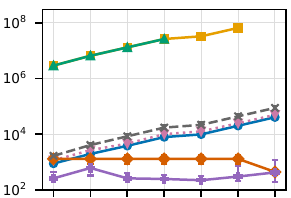}} \\[2pt]
    & \includegraphics[width=0.235\textwidth,clip]{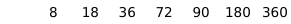} & \includegraphics[width=0.235\textwidth,clip]{figures/comparison_orientation_ticks.pdf} & \includegraphics[width=0.235\textwidth,clip]{figures/comparison_orientation_ticks.pdf} & \includegraphics[width=0.235\textwidth,clip]{figures/comparison_orientation_ticks.pdf} \\
    & \small Orientation bins, $N$ & \multicolumn{3}{c}{\raisebox{-.35\height}{\includegraphics[width=0.70\textwidth]{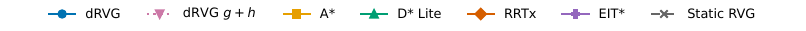}}}
  \end{tabular}}
  \caption{Comparison on 20 hard maps with a 20 s planning budget. Rows correspond to occupancy-grid cell sizes $\Delta x$; columns show success, planning time, cost relative to full-map RVG, and the number of graph nodes or grid states. Success rates use all runs. For runtime, cost, and the counts in the last column, markers show medians over successful runs and bars span the 25th--75th percentiles; missing markers indicate no success. Here $\Delta x$ denotes the occupancy-grid cell size; it also sets collision-check spacing for continuous planners.}
  \label{fig:baseline}
\end{figure*}

\begin{figure}[t]
  \centering
  \begin{minipage}[t]{0.32\columnwidth}
    \centering\footnotesize Success (\%)\par\smallskip
    \includegraphics[trim=11bp 0 0 0,clip,width=\linewidth]{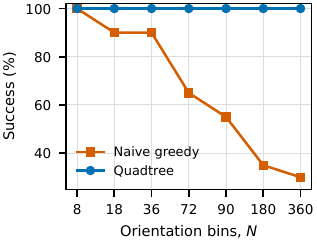}
  \end{minipage}\hfill
  \begin{minipage}[t]{0.32\columnwidth}
    \centering\footnotesize Runtime (s)\par\smallskip
    \includegraphics[trim=11bp 0 0 0,clip,width=\linewidth]{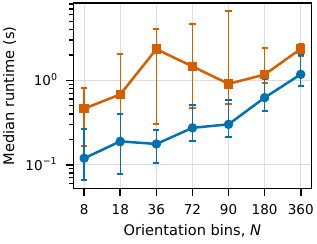}
  \end{minipage}\hfill
  \begin{minipage}[t]{0.32\columnwidth}
    \centering\footnotesize Cost $J/J_{\mathrm{RVG}}$\par\smallskip
    \includegraphics[trim=11bp 0 0 0,clip,width=\linewidth]{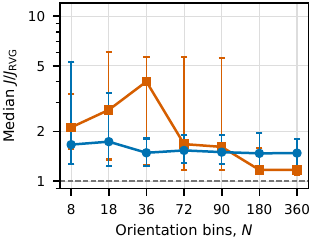}
  \end{minipage}
  \caption{Quadtree \dRVG and naive greedy \dRVG without quadtree scheduling. Success rates use all runs. Runtime and normalized cost summarize successful runs with medians and interquartile ranges.}
  \label{fig:ablation}
  \vspace{-1mm}
\end{figure}

We evaluate scaling with angular resolution on difficult maps, the benefit of spatial scheduling, and physical execution of the online planning loop.

\subsection{Protocol and Metrics}
The 20 test maps each span $30\times30$ workspace units, and the robot is a rectangle measuring $1\times1.5$ units. Let $\Delta x$ denote the cell size of the planar $\mathbb{R}^2$ occupancy grid used by repeated A*~\cite{hart1968astar} and D* Lite~\cite{koenig2002dstarlite}. We test $\Delta x\in\{0.05,0.1,0.2\}$ and $N\in\{8,18,36,72,90,180,360\}$ orientation layers. Static RVG~\cite{zhang2025rvg} solves every tested map--resolution case.

The main comparison contains 2,100 runs. Static RVG, default \dRVG, and \dRVG with the $g+h$ selector each have 140 runs: one for every map and angular resolution. Repeated A*, D* Lite, $\mathrm{RRT}^{X}$~\cite{otte2016rrtx}, and replanning from scratch with EIT*~\cite{strub2022eitstar,sabbadini2026replanning} each have 420 runs: every map and angular resolution is tested at each of the three $\Delta x$ values.

\dRVG scans at the start and at each reached temporary goal. The online experiments use the footprint-scan model; path costs and sensing-goal scores use $\alpha=1$ and $\beta=0.1$. \dRVG splits leaves with over 32 sites if each child is at least one robot diameter wide.

Each run has a 20 s cumulative planning-time budget. Success requires goal arrival without exhausting this budget and complete-map collision validation. EIT* receives at most 100\,ms per planning query, following the setting with the highest success rate in Sabbadini et al.'s query-budget comparison~\cite{sabbadini2026replanning}. The normalized path cost is $J/J_{\rm RVG}$, where $J_{\rm RVG}$ is the shortest-path cost in the full-map RVG for the same map and $N$. We compute success rates over all runs. Figure~\ref{fig:baseline} reports medians of planning time, normalized executed cost, and graph-node or grid-state counts over each method's successful runs; error bars span the 25th--75th percentiles. These subsets may differ across methods. For cases solved by both sensing selectors, we report the geometric mean and median of their runtime ratios. Experiments run on an Intel Core i9-14900KF. Repeated A* and D* Lite use 24 threads for grid construction; all methods use one for graph search. Sampling-based planners use seed 1.

\emph{Baseline planning and execution.}
Repeated A* and D* Lite use conservative $SE(2)$ lattices covering each position cell and orientation interval. They accumulate visible obstacle-boundary segments and treat unseen space optimistically. $\mathrm{RRT}^{X}$ and EIT* use continuous $SE(2)$ and receive complete obstacle polygons when any part becomes visible. For these continuous planners, $\Delta x$ controls collision checking. Steps are $\Delta x/4$ and $2\pi/(4N)$ during planning, and $\Delta x/40$ and $2\pi/(40N)$ for final validation.

All online planners follow a scan--plan--move--scan cycle; their chosen motions produce different observation sequences. Each planner incorporates the latest observation, computes a route, and executes a portion of it before scanning again at the reached pose. If no path is returned for the next move, the run fails: the robot cannot advance to obtain a new observation. We use the OMPL implementation of EIT*~\cite{sucan2012ompl}, rejecting approximate solutions and applying no smoothing.

\subsection{Planner Comparison}
Default \dRVG succeeds in all 140 map--resolution cases; its optional $g+h$ selector succeeds in 139/140. Figure~\ref{fig:baseline} shows the results separately for each occupancy-grid cell size $\Delta x$. At $N=360$, default \dRVG's median planning time is 1.18 s. Using each baseline's lowest median runtime across the three grid cell sizes, repeated A*, D* Lite, and $\mathrm{RRT}^{X}$ take $4.40\times$, $11.15\times$, and $5.09\times$ as long, respectively. Among successful runs, EIT* has lower median runtimes of 0.49, 0.47, and 0.45 s for $\Delta x=0.05$, $0.1$, and $0.2$, but it succeeds in only 4/20, 5/20, and 5/20 cases, respectively. Across all angular resolutions and grid cell sizes, EIT* succeeds in 101 of the 420 runs; 10 additional reported solutions fail the final collision check.

The fourth column of Figure~\ref{fig:baseline} reports the number of graph nodes or occupancy-grid states. At $N=360$, the default \dRVG and $g+h$ graphs contain medians of 40,358 and 49,115 vertices, versus 87,370 for static RVG. The lattice planners allocate 129.6, 32.4, and 8.1 million states at $\Delta x=0.05$, $0.1$, and $0.2$, respectively. At this angular resolution, the lattices contain approximately 201--3,211 times as many states as the median default \dRVG roadmap, illustrating the compactness of its geometric representation. The one successful $\mathrm{RRT}^{X}$ tree per cell size contains 440 nodes. For EIT*, we count nodes in each query graph and retain the largest count during a run. The medians over successful runs are 422.5, 262, and 262 nodes for $\Delta x=0.05$, $0.1$, and $0.2$, respectively.

The tested online runs incur higher executed cost than full-map planning. At $N=360$, median $J/J_{\rm RVG}$ is 1.477 for default \dRVG and 1.306 for $g+h$; baseline medians across the grid cell sizes range from 1.122 to 1.457. Cost medians may cover different maps because successful subsets vary.

Finer grids increase spatial resolution but can reduce success under the planning-time budget. At $N=360$, repeated A* solves no case at $\Delta x=0.05$; it solves 10/20 at $0.1$ and 18/20 at $0.2$, with median runtimes of 16.02 and 5.19 s over successful runs. D* Lite solves no case at $0.05$ or $0.1$ and 10/20 at $0.2$, with a median runtime of 13.15 s over successful runs. $\mathrm{RRT}^{X}$ solves 1/20 at every collision-check resolution, with runtimes from 6.00 to 6.46 s for those successful runs. \dRVG has no translation-cell parameter.

We also vary the EIT* query budget at $\Delta x=0.1$ and $N=72$. Budgets of 10, 50, and 100\,ms yield 0/20, 3/20, and 7/20 successful runs, respectively.

\subsection{Naive Selector Reference}
Figure~\ref{fig:ablation} compares 140 runs per selector using the same footprint-scan model, maps, and angular resolutions. Both selectors exclude previously scanned positions, regardless of orientation. Naive greedy minimizes Eq.~\eqref{eq:selector-score} over all reachable configurations at unscanned sites; quadtree selection additionally defers served regions.

Quadtree \dRVG solves 140/140 cases, whereas naive greedy solves 93/140; all 47 naive-greedy failures are timeouts at the 20 s planning budget. Across jointly solved cases, greedy requires $5.51\times$ the runtime and $2.75\times$ as many scan--plan--move iterations in geometric mean; the paired median runtime ratio is $3.94\times$. At $N=360$, quadtree solves 20/20 versus greedy's 6/20, with a $3.84\times$ paired geometric-mean runtime ratio. Considering each method's successful runs separately, median runtimes are 1.18 s for quadtree and 2.36 s for greedy, and median $J/J_{\rm RVG}$ values are 1.477 and 1.170, respectively. At $N=360$, the greedy cost median uses six successful runs, whereas the quadtree median uses all 20.

\subsection{Real-Robot Demonstration}\label{sec:real-robot}

We use the microMVP platform~\cite{YuHanTanRus17ICRA} to demonstrate \dRVG. At each planning iteration, we compute footprint-scan observations and visible obstacle geometry from an overhead-camera image. We use $N=36$ orientations with $\alpha=1.0$ and $\beta=0.1$.

We demonstrate navigation in six arrangements of polygonal obstacles. In each recorded run, the robot reaches its requested goal pose through successive translation and rotation motions. Figure~\ref{fig:real-robot} shows a representative sequence: \dRVG visits temporary sensing goals, incorporates newly observed space, and reorients the robot around obstacles until a route to the final goal becomes available. Parts of the workspace remain unobserved at goal arrival. The supplementary video presents all six runs at their original speed.

\section{Conclusion}\label{sec:conclusion}
\dRVG merges local RVGs and uses quadtree sensing to navigate polygonal robots among unknown static obstacles. Under the stated assumptions and without a time limit, center-scan \dRVG reaches any goal connected in full-map RVG at the same angular resolution. The analysis does not establish this guarantee for arbitrary polygonal robots or footprint scans. In footprint-scan experiments, default \dRVG solved all 140 map--resolution cases within the planning budget; the ablation supports quadtree scheduling over naive greedy selection. Six microMVP demonstrations use overhead-camera observations.

Integrating \dRVG with simultaneous localization and mapping (SLAM) would update the roadmap as pose and map estimates change. Frontier selection could then balance goal progress, information gain, and localization uncertainty. Loop closures would require revising roadmap geometry and revalidating affected connections. Other directions include adapting spatial scheduling to finite-range sensing and assessing how map uncertainty affects collision margins. These extensions would support navigation with onboard sensing.

{\footnotesize
\bibliographystyle{formatting/IEEEtran}
\bibliography{bib/jingjin}
}

\end{document}